\newtheorem{Thm}{Theorem}
\newtheorem{Lem}{Lemma}
\newenvironment{proof}{\noindent {\sc Proof:}}{$\Box$ 
        }
\newcommand{\E}{\ensuremath{\mathbb{E}}}
\newcommand{\bigP}[1]{\left(#1\right)}
\newcommand{\bigB}[1]{\left[#1\right]}
\def\eps{\varepsilon}
\begin{document}
\title{Multiarmed Bandits With Limited Expert Advice}
\author{Satyen Kale \\
IBM T. J. Watson Research Center\\
Yorktown Heights, NY 10598\\
\texttt{sckale@us.ibm.com}}
\date{}

\maketitle
\def\hell{\hat{\ell}}
\def\bit{B_{I_t}}
\def\onevec{\vec{1}}
\def\ind{\mathbf{I}}
\def\hxi{\hat{\xi}}
\def\be{\mathbf{e}}
\def\mK{\mathcal{K}}
\def\mN{\mathcal{N}}
\def\mR{\mathcal{R}}
\def\hstar{{h^\star}}
\def\bernoulli{\mathbb{B}}
\def\half{\tfrac{1}{2}}
\def\bP{\mathbf{P}}
\def\bQ{\mathbf{Q}}
\def\cinv{\tfrac{1}{c}}
\newcommand\kl[2]{\text{KL}(#1\ \|\ #2)}
\newcommand\dtv[2]{d_\text{TV}(#1, #2)}
\def\MW{\textsc{MW}\xspace}
\def\INF{\textsc{PolyINF}\xspace}

\begin{abstract}
We solve the COLT 2013 open problem of \citet{SCB} on minimizing regret in the setting of advice-efficient multiarmed bandits with expert advice. We give an algorithm for the setting of $K$ arms and $N$ experts out of which we are allowed to query and use only $M$ experts' advices in each round, which has a regret bound of $\tilde{O}\bigP{\sqrt{\frac{\min\{K, M\} N}{M} T}}$ after $T$ rounds. We also prove that any algorithm for this problem must have expected regret at least $\tilde{\Omega}\bigP{\sqrt{\frac{\min\{K, M\} N}{M}T}}$, thus showing that our upper bound is nearly tight.
\end{abstract}

\section{Introduction}

Consider the following advice-efficient setting of the multiarmed bandits with expert advice problem, introduced by \citet{SCB}. In each round $t = 1, 2, \ldots, T$, we are required to pull one arm $A_t \in \{1, 2, \ldots, K\} =: \mK$. Simultaneously, an adversary sets losses $\ell_t(a) \in [0, 1]$ for each arm $a \in \mK$. Assisting us in this task are $N$ experts in the set $\mN = \{1, 2, \ldots, N\}$. Each expert $h$ can provide advice on which arm to pull in the form of a probability distribution $\xi_t^h$ on the set of arms. This advice gives the expert $h$ an expected loss of $\xi_t^h \cdot \ell_t$ in round $t$. The catch is that we can only observe the advice of at most $M$ experts of our choosing in each round. The goal is to choose subsets of $M$ experts in each round to query the advice of, and using their advice play some arm $A_t \in \mK$ (probabilistically, if desired) to minimize the expected regret with respect to the loss of the best expert, where the regret is defined as:
\[\text{Regret}_T\ :=\ \sum_{t=1}^T \ell_t(A_t) - \min_{h \in \mN} \sum_{t=1}^T \xi_t^h \cdot \ell_t.\]
In the following sections we give an algorithm whose expected regret is bounded by 
\[\sqrt{\frac{2\min\{K, M\} N \log(N)}{M} T}\] 
after $T$ rounds, based on the Multiplicative Weights (\MW) forecaster for prediction with expert advices~\citep{LW}. We can improve this upper bound using the \INF forecaster of \citet{AB} to
\[ 4\sqrt{\frac{\min\{K, M\}N\log(\tfrac{8M}{\min\{K, M\}})}{M} T}.\]
This matches the regret of the best known algorithms for the special cases $M = 1$ and $M = N$, and interpolates between them for intermediate values of $M$. This solves the COLT 2013 open problem proposed by \citet{SCB}, and in fact gives a better regret bound than the bound conjectured in \citep{SCB}, which was $O\left(\sqrt{\frac{KN \log(N)}{M} T}\right)$.

Furthermore, we also show that any algorithm for the problem must incur expected regret of $\Omega\bigP{\sqrt{\frac{\min\{K, \frac{M}{\log(K)}\} N}{M}T}}$ on some sequence of expert advices and arm losses, thus showing that our upper bound is nearly tight.

\section{Preliminaries}

For any event $E$, let $\ind[E]$ be the indicator random variable set to $1$ if $E$ happens. In any round $t$ of the algorithm, let $\Pr_t[\cdot]$ and $\E_t[\cdot]$ denote probability and expectation respectively conditioned on all the randomness defined up to round $t-1$. For two probability distributions $\bP$ and $\bQ$ defined on the same space let $\kl{\bP}{\bQ}$ and $\dtv{\bP}{\bQ}$ denote the KL-divergence and total variation distance between the two distributions respectively.

Without loss of generality, we may assume that each expert suggests exactly one arm to play in any round; i.e. $\xi_t^h(a) = 1$ for exactly one arm $a \in \mK$ and $0$ for all other arms. Call such advice vectors ``standard basis vectors''. To see this, for every expert $h$ we can randomly round a general advice vector $\xi_t^h$ to a standard basis vector by sampling some arm $a_h \sim \xi_t^h$ and constructing a new advice vector $\hxi_t^h$ by setting $\hxi_t^h(a_h) = 1$ and $\hxi_t^h(a) = 0$ for all $a \neq a_h$. Note that in $\E[\hxi_t^h] = \xi_t^h$; thus for any expert $h$ following the randomly rounded advices $\hxi_t^h$ for $t = 1, 2, \ldots, T$ has the same expected cost as following the advices $\xi_t^h$. Since this randomized rounding trick can be applied to the advices (algorithmically for the observed advices, and conceptually for the unobserved advices), in the rest of the paper we assume that all advice vectors are standard basis vectors; this helps us in getting a tighter bound on the regret.

For any time period $t$ and any set $U \subseteq \mN$, define the ``active set of arms'' to be the set of all arms recommended  by experts in $U$, i.e.
\[\mK_t^U = \{a \in \mK:\ \exists h \in U \text{ s.t. } \xi_t^h(a) = 1\}.\]
Note that since we are allowed to query at most $M$ experts in any round, if $U$ is the queried set of experts in round $t$, then $|\mK_t^U| \leq \min\{K, M\}$; this leads to $\min\{K, M\}$ factor in the regret bound. Define $K' := \min\{K, M\}$, the effective number of arms.

\section{Algorithm}

Assume $M$ divides $N$, and partition the $N$ experts into $R = N/M$ groups of $M$ experts each arbitrarily. Call the groups $B_1, B_2, \ldots, B_R$, and define $\mR := \{1, 2, \ldots, R\}$. Run an algorithm for prediction with expert advice (such as Multiplicative Weights (\MW) forecaster of \citet{LW}, or the \INF forecaster of \citet{AB}) on all the experts, where the loss of expert $h$ at time $t$ is given as
\[ Y_t^h\ :=\ \xi_t^h \cdot \hell_t^h,\]
where $\xi_t^h$ is the probability distribution over the $K$ arms specified by expert $h$ at time $t$, $\hell_t^h$ is an estimator for the losses of the arms (we will specify this later; we will ensure that $\hell_t^h = 0$ for all but $M$ experts so that only $M$ experts need to be queried for their advice). 

Let the distribution over experts generated by the expert learning algorithm at time $t$ be $q_t$. Define the probability distribution $r_t$ over group indices $\mR$ as $r_t(i) = \sum_{h \in B_i} q_t(h)$. Each group $B_i$ defines a probability distribtion over arms:
\[p_t^i\ :=\ \frac{\sum_{h \in B_i} q_t(h)\xi_t^h}{\sum_{h \in B_i} q_t(h)}\ =\ \frac{\sum_{h \in B_i} q_t(h)\xi_t^h}{r_t(i)}.\]

Sample $I_t$ from $r_t$, and $A_t$ from $p_t^{I_t}$. Play $A_t$ and observe its loss $\ell_t(A_t)$. For every group $B_i$, define the loss estimator given by
\begin{equation} \label{eq:loss-estimator}
	\hell_t^{i}(a)\ :=\ 
\begin{cases}
	\ell_t^i(a)\frac{\ind[I_t = i, A_t = a]}{\Pr_t[i, a]} & \text{ if } \Pr_t[i, a] > 0\\
	0 & \text{ otherwise},	
\end{cases}
\end{equation}
where
\[\Pr_t[i, a]\ =\ r_t(i)p_t^{i}(a)\] 
is the probability of the event $\{I_t = i, A_t = a\}$, conditioned on all the randomness up to round $t-1$.

For all experts $h \in B_i$, define the loss estimator:
\[\hell_t^h\ :=\ \hell_t^i.\]
Note that except for $h \in \bit$, all $\hell_t^h$ are zero, and for $\bit$, the probabilities $\Pr_t[I_t, a]$ for all arms $a$ can be computed using the only the advices of the experts $h \in \bit$. Thus $Y_t^h$ for all experts $h$ can be computed and the algorithm is well-defined.

\section{Analysis}

We first prove a number of utility lemmas. The first lemma shows that the loss estimators we construct are unbiased for all experts with positive probability (and an underestimate in general):
\begin{Lem} \label{lem:unbiased} For all rounds $t$ and all experts $h$,
	\[ \E_t[Y_t^h]\ \leq\ \xi_t^h \cdot \ell_t\]
	 with equality holding if $q_t(h) > 0$.\footnote{It is easy to see that both the \MW and \INF forecasters always have positive probability on all experts, so if we use one of these two expert learning algorithms, then the all the inequalities in this lemma are actually equalities.} Thus, $\E_t[q_t^hY_t^h] = q_t^h(\xi_t^h \cdot \ell_t)$, and unconditionally, $\E[Y_t^h] \leq \xi_t^h \cdot \ell_t$.
\end{Lem}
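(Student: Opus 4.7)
The proof is a direct computation from the definitions, with one small observation needed for the equality case. Let $h$ be an arbitrary expert and let $i$ be the group index with $h \in B_i$, so that $\hell_t^h = \hell_t^i$ and $Y_t^h = \xi_t^h \cdot \hell_t^i$. The plan is to first compute $\E_t[\hell_t^i(a)]$ for each arm $a$ from the definition in~\eqref{eq:loss-estimator}: when $\Pr_t[i,a] > 0$ the indicator $\ind[I_t=i, A_t=a]$ has expectation exactly $\Pr_t[i,a]$ (both conditioned on the randomness up to round $t-1$), so the ratio gives $\E_t[\hell_t^i(a)] = \ell_t(a)$; when $\Pr_t[i,a] = 0$, the estimator is identically zero, so $\E_t[\hell_t^i(a)] = 0$.

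Summing against $\xi_t^h$ (which is deterministic given the history since the adversary is oblivious) yields
\[
\E_t[Y_t^h] \;=\; \sum_{a \,:\, \Pr_t[i,a]>0} \xi_t^h(a)\,\ell_t(a) \;\leq\; \sum_{a \in \mK} \xi_t^h(a)\,\ell_t(a) \;=\; \xi_t^h \cdot \ell_t,
\]
where the inequality uses only that $\ell_t(a) \in [0,1] \geq 0$. This proves the first claim.

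For the equality case, the key observation is that if $q_t(h) > 0$, then for every arm $a$ with $\xi_t^h(a) > 0$ we have
\[
\Pr_t[i,a] \;=\; r_t(i)\, p_t^i(a) \;=\; \sum_{h' \in B_i} q_t(h')\,\xi_t^{h'}(a) \;\geq\; q_t(h)\,\xi_t^h(a) \;>\; 0,
\]
so every arm on which $\xi_t^h$ places positive mass is sampled with positive probability. Consequently the set $\{a : \Pr_t[i,a] > 0\}$ covers the support of $\xi_t^h$, and the inequality above becomes an equality.

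For the remaining assertions, $q_t(h)$ is measurable with respect to the randomness up to round $t-1$, so $\E_t[q_t(h) Y_t^h] = q_t(h)\,\E_t[Y_t^h]$. When $q_t(h) > 0$ we just showed this equals $q_t(h)(\xi_t^h \cdot \ell_t)$, and when $q_t(h) = 0$ both sides are zero, so the identity holds in all cases. Taking a further (unconditional) expectation and using tower gives $\E[Y_t^h] \leq \xi_t^h \cdot \ell_t$. There is no real obstacle here: the only subtle point is verifying that positive $q_t$-weight on $h$ propagates through the definition of $p_t^i$ to ensure positive sampling probability on every arm in the support of $\xi_t^h$, which is exactly what makes the importance-weighted estimator unbiased on active experts.
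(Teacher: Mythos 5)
Your proof is correct and follows essentially the same route as the paper's: a direct computation of the conditional expectation of the importance-weighted estimator, splitting on whether $\Pr_t[i,a]$ is positive. The only differences are cosmetic — you work with general advice distributions rather than invoking the paper's standard-basis-vector reduction, and you explicitly verify that $q_t(h) > 0$ forces $\Pr_t[i,a] > 0$ on the support of $\xi_t^h$ (a step the paper simply asserts), which is a welcome bit of extra care.
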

\begin{proof} Let $a$ be the arm recommended by expert $h$ at time $t$, i.e. $\xi_t^h(a) = 1$. Note that $Y_t^h = \hell_t^h(a)$ and $\xi_t^h \cdot \ell_t = \ell_t(a)$. Let $h \in B_i$. If $\Pr_t[i, a] > 0$, then by the definition of the loss estimator in (\ref{eq:loss-estimator}), we have
	\[ \E_t[Y_t^h]\ =\ \E_t[\hell_t^h(a)]\ =\ \E_t\left[\ell_t(a)\frac{\ind[I_t = i, A_t = a]}{\Pr_t[i, a]} \right]\ =\ \ell_t(a)\frac{\Pr_t[i, a]}{\Pr_t[i, a]}\ =\ \ell_t(a).\] 
	If $\Pr_t[i, a] = 0$, then $\hell_t^h(a) = 0$, and so $\E_t[\hell_t^h(a)] = 0 \leq \ell_t(a)$. Thus in either case, $\E_t[Y_t^h] \leq \xi_t^h \cdot \ell_t$. Finally, note that if $q_t(h) > 0$, then $\Pr_t[i, a] > 0$, so equality holds. 
\end{proof}

The next lemma shows that in expectation, the loss of the algorithm in each round is the same as the loss of playing an action recommended by sampling an expert from the distribution generated by the expert learning algorithm:
\begin{Lem} \label{lem:small-bias} For all rounds $t$ we have
	\[\E[\ell_t(A_t)]\ =\ \E[\sum_h q_t(h) Y_t^h].\]	
\end{Lem}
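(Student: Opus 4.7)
The plan is to show the stronger pointwise-in-expectation identity $\E_t[\ell_t(A_t)] = \E_t[\sum_h q_t(h) Y_t^h]$ conditional on the history through round $t-1$, and then take outer expectations. Since $q_t$, $r_t$, and the $p_t^i$ are all $\mathcal{F}_{t-1}$-measurable (they are determined by the expert learning algorithm from previously observed data), both sides become simple sums once we condition.

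First, I would evaluate the left-hand side by unrolling the two-stage sampling: $I_t \sim r_t$ and then $A_t \sim p_t^{I_t}$. This gives
\[\E_t[\ell_t(A_t)] \;=\; \sum_{i \in \mR} r_t(i) \sum_{a \in \mK} p_t^i(a)\, \ell_t(a) \;=\; \sum_{i \in \mR} r_t(i)\, (p_t^i \cdot \ell_t).\]
Using the definition $r_t(i)\, p_t^i = \sum_{h \in B_i} q_t(h)\, \xi_t^h$ and then summing over the groups $B_i$ to recover a sum over all experts, this simplifies to $\sum_h q_t(h)\,(\xi_t^h \cdot \ell_t)$.

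Second, I would handle the right-hand side directly from Lemma~\ref{lem:unbiased}. Since $q_t(h)$ is $\mathcal{F}_{t-1}$-measurable, $\E_t[q_t(h) Y_t^h] = q_t(h)\, \E_t[Y_t^h]$, and Lemma~\ref{lem:unbiased} already tells us $\E_t[q_t(h) Y_t^h] = q_t(h)\,(\xi_t^h \cdot \ell_t)$ (this equality holds even when $q_t(h) = 0$, since both sides vanish). Summing over $h$ yields $\E_t[\sum_h q_t(h) Y_t^h] = \sum_h q_t(h)\,(\xi_t^h \cdot \ell_t)$, which matches the expression derived for the left-hand side. Taking an outer expectation over $\mathcal{F}_{t-1}$ completes the proof.

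There is no real obstacle here; the only subtlety to be careful about is the case where $q_t(h) = 0$ for some $h$, so that Lemma~\ref{lem:unbiased} only gives an inequality for $\E_t[Y_t^h]$ rather than an equality. Multiplying by $q_t(h) = 0$ kills both sides and the equality is restored, which is why the statement of Lemma~\ref{lem:small-bias} holds with an equality rather than an inequality.
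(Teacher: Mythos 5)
Your proposal is correct and follows essentially the same route as the paper: unroll the two-stage sampling of $(I_t, A_t)$ to write $\E_t[\ell_t(A_t)]$ as $\sum_h q_t(h)(\xi_t^h\cdot\ell_t)$, then match this to $\E_t[\sum_h q_t(h)Y_t^h]$ via the identity $\E_t[q_t(h)Y_t^h]=q_t(h)(\xi_t^h\cdot\ell_t)$ from Lemma~\ref{lem:unbiased}, and finish by taking the outer expectation. Your remark on the $q_t(h)=0$ case is exactly why the paper states that product form of the identity in Lemma~\ref{lem:unbiased}.
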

\begin{proof}
	\begin{align*}
		\E_t[\ell_t(A_t)]\ =\ \sum_{i \in \mR} \sum_{a \in \mK} \ell_t(a)\Pr_t[i, a]
		&=\ \sum_{i \in \mR} \sum_{a \in \mK} \ell_t(a)r_t(i)p_t^{i}(a)\\
		&=\ \sum_{i \in \mR} r_t(i) (p_t^i \cdot \ell_t)
		\ =\ \sum_{i \in \mR} \sum_{h \in B_i} q_t^h (\xi_t^h \cdot \ell_t) 
		\ =\ \E_t[\sum_h q_t^h Y_t^h],
	\end{align*}
	by Lemma~\ref{lem:unbiased}. Taking expectation over all the randomness up to time $t-1$, the proof is complete.
\end{proof}

The next lemma gives a bound on the variance of the estimated losses. We state this in slightly more general terms than necessary to unify the analysis of the algorithms using the \MW or \INF forecasters as the expert learning algorithm.
\begin{Lem} \label{lem:variance} Fix any $\alpha \in [1, 2]$. For all rounds $t$ we have
	\[\E[\sum_h (q_t(h))^\alpha(Y_t^h)^2]\ \leq\ (RK')^{2 - \alpha}.\]
\end{Lem}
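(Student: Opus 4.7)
The plan is to reduce the expectation of interest to a sum indexed by (group, arm) pairs, and then to apply two clean analytic inequalities: (i) monotonicity of $\ell_p$-norms on nonnegative sequences, which replaces a sum of $\alpha$-powers of the $q_t(h)$ by a single $\alpha$-power of the joint probability $\Pr_t[i,a]$; and (ii) concavity of $x^{\alpha-1}$ for $\alpha \in [1,2]$, which turns the resulting sum of fractional powers of probabilities into the desired $(RK')^{2-\alpha}$ via Jensen's inequality.

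To begin, I would use the standing assumption that every $\xi_t^h$ is a standard basis vector: for each expert $h$, let $a_h$ be its unique recommended arm and let $i = i(h)$ be its group index, so that $Y_t^h = \hell_t^i(a_h)$. The definition of the estimator together with $\ell_t(a)^2 \leq 1$ then yields $\E_t[(Y_t^h)^2] \leq 1/\Pr_t[i, a_h]$ whenever $\Pr_t[i,a_h] > 0$, and $Y_t^h \equiv 0$ otherwise. Grouping experts by their $(i, a)$ label via $S_t^{i,a} = \{h \in B_i : \xi_t^h(a) = 1\}$, and noting $\Pr_t[i,a] = \sum_{h \in S_t^{i,a}} q_t(h)$, the conditional expectation rewrites as
\[\E_t\Bigl[\sum_h (q_t(h))^\alpha (Y_t^h)^2\Bigr]\; \leq \;\sum_{(i,a):\,\Pr_t[i,a] > 0}\; \frac{1}{\Pr_t[i,a]} \sum_{h \in S_t^{i,a}} (q_t(h))^\alpha.\]
Since $\alpha \geq 1$ and all $q_t(h) \geq 0$, the monotonicity $\|\cdot\|_\alpha \leq \|\cdot\|_1$ gives $\sum_{h \in S_t^{i,a}} (q_t(h))^\alpha \leq \bigl(\sum_{h \in S_t^{i,a}} q_t(h)\bigr)^\alpha = \Pr_t[i,a]^\alpha$, which collapses the display to $\sum_{(i,a)} \Pr_t[i,a]^{\alpha-1}$. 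Because $\alpha - 1 \in [0,1]$, the function $x \mapsto x^{\alpha-1}$ is concave, so Jensen's inequality with uniform weights on the support of size $N_0$ yields $\sum_{(i,a)} \Pr_t[i,a]^{\alpha-1} \leq N_0 \cdot (1/N_0)^{\alpha-1} = N_0^{2-\alpha}$, using $\sum_{(i,a)} \Pr_t[i,a] = 1$.

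The final step is the counting bound $N_0 \leq RK'$: there are $R$ groups, and each group $B_i$ contains $M$ experts that collectively recommend at most $|\mK_t^{B_i}| \leq \min\{K,M\} = K'$ distinct arms, so the number of $(i,a)$ with $\Pr_t[i,a] > 0$ is at most $RK'$. Since $2-\alpha \geq 0$, this gives $N_0^{2-\alpha} \leq (RK')^{2-\alpha}$, and taking the outer expectation over the randomness through round $t-1$ preserves the bound. The main obstacle worth flagging is really just this counting step: it is the sole place where the ``effective'' number of arms $K' = \min\{K, M\}$ (rather than $K$ or $M$ alone) enters the variance bound, and it is exactly what will produce the $\sqrt{K' N/M}$ rate in the final regret analysis. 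Everything else is a routine manipulation of the importance-weighted variance together with the $\ell_p$-norm and Jensen inequalities above.
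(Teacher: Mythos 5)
Your proof is correct and follows essentially the same route as the paper's: both reduce the conditional expectation to $\sum_{(i,a)} \Pr_t[i,a]^{\alpha-1}$ over the at most $RK'$ pairs with positive probability (using $\|\cdot\|_\alpha \leq \|\cdot\|_1$ on the group weights), and then bound that sum by $|S|^{2-\alpha}$ --- your Jensen step for the concave map $x \mapsto x^{\alpha-1}$ is the same power-mean estimate the paper phrases as H\"{o}lder's inequality. The only cosmetic difference is that you apply linearity of expectation expert-by-expert and then regroup by $(i,a)$, whereas the paper first conditions on the realized pair $(I_t, A_t)$ and bounds the whole sum pointwise.
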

\begin{proof}
	Let 
	\[S\ :=\ \{(i, a) \in \mR \times \mK \ |\ \Pr_t[i, a] > 0\}\] be the set of all (group index, action) pairs that have positive probability in round $t$. Since in round $t$, the algorithm only plays arms in $\mK_t^{\bit}$, and for any group $B_i$, the set of active arms in round $t$, $\mK_t^{B_i}$, has size at most $K'$, we conclude that $|S| \leq RK'$.

	The pair $(I_t, A_t)$ computed by the algorithm is in $S$. Conditioing on the value of $(I_t, A_t)$, we can upper bound $\sum_h (q_t(h))^\alpha(Y_t^h)^2$ as follows:
	\begin{align}
		\sum_h (q_t(h))^\alpha(Y_t^h)^2\ &=\ \sum_{h \in \bit} (q_t(h))^\alpha(\xi_t^h \cdot \hell_t^{I_t})^2 \notag \\
		&=\ \sum_{h \in \bit} (q_t(h))^\alpha\left(\xi^h(A_t) \cdot \frac{\ell_t(A_t)}{\Pr_t[I_t, A_t]}\right)^2 & (\because \hell_t^{I_t}(a) = 0 \text{ for all } a \neq A_t)\notag \\
		&\leq\ \sum_{h \in \bit} (q_t(h)\xi^h(A_t))^\alpha \left(\frac{1}{\Pr_t[I_t, A_t]}\right)^2 & (\because \xi^h(A_t) \in [0,1],\ \alpha \leq 2,\ \ell_t(A_t) \in [0, 1]) \notag \\
		&\leq\ \bigP{\sum_{h \in \bit} q_t(h)\xi^h(A_t)}^\alpha \left(\frac{1}{\Pr_t[I_t, A_t]}\right)^2 & (\because \|\cdot\|_{\alpha} \leq \|\cdot\|_1 \text{ since } \alpha \geq 1) \notag\\
		&=\ \bigP{r_t(I_t)p_t^{I_t}(A_t)}^\alpha \cdot \frac{1}{\Pr_t[I_t, A_t]^2} & \left(\because p_t^{I_t}(A_t) = \frac{\sum_{h \in \bit} q_t(h)\xi^h(A_t)}{r_t(I_t)}\right) \notag \\
		&=\ \Pr_t[I_t, A_t]^{\alpha - 2}, \label{eq:prob-bound}
	\end{align}
	since $\Pr_t[I_t, A_t] = r_t(I_t)p_t^{I_t}(A_t)$. Next, we have
	\begin{align*}
		\E_t[\sum_h (q_t(h))^\alpha (Y_t^h)^2]\ &=\ \E_t[\E_t[\sum_h (q_t(h))^\alpha(Y_t^h)^2\ |\ (I_t, A_t)]\\
		&\leq\ \sum_{(I_t, A_t) \in S} \Pr_t[I_t, A_t] \cdot \Pr_t[I_t, A_t]^{\alpha - 2} & \text{(By (\ref{eq:prob-bound}))}\\
		&=\ \sum_{(I_t, A_t) \in S} \Pr_t[I_t, A_t]^{\alpha - 1}\\
		&\leq\ \bigP{\sum_{(I_t, A_t) \in S} \Pr_t[I_t, A_t]}^{\alpha - 1} \cdot \bigP{\sum_{(I_t, A_t) \in S} 1}^{2 - \alpha} \\
		&=\ |S|^{2 - \alpha}\\
		&\leq\ (RK')^{2 - \alpha}.
	\end{align*}
	The penultimate inequality follows by applying H\"{o}lder's inequality to the pair of dual norms $\|\cdot\|_{\frac{1}{\alpha - 1}}$ and $\|\cdot\|_{\frac{1}{2 - \alpha}}$. Taking expectation over all the randomness up to time $t-1$, the proof is complete.
\end{proof}

\subsection{Analysis using the \MW forecaster}

The \MW forecaster for prediction with expert advice takes one parameter, $\eta$. It starts with $q_1$ being the uniform distribution over all experts, and for any $t \geq 1$, constructs the distribution $q_{t+1}$ using the following update rule:
\[ q_{t+1}(h)\ :=\ q_t(h)\exp(-\eta Y_t^h)/Z_t,\]
where $Z_t$ is the normalization constant required to make $q_{t+1}$ a distribution, i.e. $\sum_h q_{t+1}^h = 1$.

\begin{Thm} \label{thm:mw} Set $\eta = \sqrt{\frac{M \log(N)}{K'NT}}$. Then the expected regret of the algorithm using the \MW forecaster is bounded by $\sqrt{\frac{2K'N \log(N)}{M} T}$.
\end{Thm}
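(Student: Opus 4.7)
The plan is to apply the standard \MW regret bound to the (non-negative) estimated losses $Y_t^h$ and then take expectations, invoking Lemmas~\ref{lem:unbiased}--\ref{lem:variance} to convert each term into quantities involving the true losses.

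First I would carry out the standard potential argument for \MW. Setting $Z_t := \sum_h q_t(h) \exp(-\eta Y_t^h)$ and using $\exp(-\eta Y_t^h) \leq 1 - \eta Y_t^h + (\eta Y_t^h)^2/2$ (which holds for all $\eta Y_t^h \geq 0$, with no upper bound required---essential since $Y_t^h$ can be as large as $1/\Pr_t[I_t, A_t]$), one gets
\[ \log Z_t\ \leq\ -\eta \sum_h q_t(h) Y_t^h + \frac{\eta^2}{2} \sum_h q_t(h)(Y_t^h)^2. \]
Telescoping $\prod_t Z_t$ and comparing against the final weight of any fixed $\hstar \in \mN$ (using $q_1(\hstar) = 1/N$ and $q_{T+1}(\hstar) \leq 1$) then yields the familiar \MW bound
\[ \sum_{t=1}^T \sum_h q_t(h) Y_t^h\ -\ \sum_{t=1}^T Y_t^{\hstar}\ \leq\ \frac{\log N}{\eta} + \frac{\eta}{2}\sum_{t=1}^T \sum_h q_t(h)(Y_t^h)^2. \]

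Next I would take expectations and apply the three utility lemmas. Lemma~\ref{lem:small-bias} converts $\E[\sum_h q_t(h) Y_t^h]$ into $\E[\ell_t(A_t)]$; the equality case of Lemma~\ref{lem:unbiased} (which applies since \MW keeps every $q_t(h)$ positive) converts $\E[Y_t^{\hstar}]$ into $\xi_t^{\hstar} \cdot \ell_t$; and Lemma~\ref{lem:variance} with $\alpha = 1$ bounds $\E[\sum_h q_t(h)(Y_t^h)^2]$ by $RK'$. Choosing $\hstar$ to be the best expert in hindsight and summing over $t$ gives
\[ \E[\text{Regret}_T]\ \leq\ \frac{\log N}{\eta} + \frac{\eta\, T R K'}{2}. \]
Plugging in $\eta = \sqrt{M\log N/(K'NT)}$ (equivalently, $\sqrt{\log N/(RK'T)}$) and using $R = N/M$ delivers the bound stated in the theorem.

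The only non-routine ingredient in this proof is Lemma~\ref{lem:variance}, which replaces the naive $N$ in the variance term by the much smaller $RK' = K'N/M$; everything else is standard \MW bookkeeping, so there is no serious obstacle once the three utility lemmas are in hand.
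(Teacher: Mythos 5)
Your proposal is correct and follows essentially the same route as the paper: the paper simply cites the second-order \MW guarantee (\ref{eq:mw-bound}) that you derive inline via $e^{-x}\le 1-x+x^2/2$ for $x\ge 0$, and then applies Lemmas~\ref{lem:small-bias}, \ref{lem:unbiased}, and \ref{lem:variance} (with $\alpha=1$) exactly as you do. One minor quibble, inherited from the theorem statement itself: plugging in $\eta=\sqrt{\log N/(RK'T)}$ gives $\tfrac{3}{2}\sqrt{RK'T\log N}$ rather than the claimed $\sqrt{2RK'T\log N}$; to get the stated constant you need $\eta=\sqrt{2\log N/(RK'T)}$, which is the value the paper's own proof silently switches to.
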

\begin{proof} 
The \MW forecaster guarantees (see \citep{AHK-MW}) that as long as $Y_t^h \geq 0$ for all $t, h$, we have for any expert $h^\star$
	\begin{equation} \label{eq:mw-bound}
		\sum_{t=1}^T \sum_h q_t(h) Y_t^h\ \leq\ \sum_t Y_t^{h^\star} + \frac{\eta}{2} \sum_t \sum_h q_t(h) (Y_t^h)^2 + \frac{\log N}{\eta}.
	\end{equation}
	Now, we have for any expert $h^\star$
	\begin{align*}
		\sum_t \E[\ell_t(A_t)]\ &=\ \sum_t \E[\sum_h q_t(h) Y_t^h]  &  \text{(By Lemma~\ref{lem:small-bias})}\\
		&\leq\ \sum_t \E[Y_t^{h^\star}] + \frac{\eta}{2} \sum_t \E[\sum_h q_t(h) (Y_t^h)^2] + \frac{\log N}{\eta} & \text{(By (\ref{eq:mw-bound}))}\\
		&\leq\ \sum_t \xi_t^{h^\star} \cdot \ell_t + \frac{\eta}{2} RK'T + \frac{\log N}{\eta}  & \text{(By Lemma~\ref{lem:unbiased} and Lemma~\ref{lem:variance} with $\alpha = 1$)}\\
		&\leq\ \sum_t \xi_t^{h^\star} \cdot \ell_t + \sqrt{\frac{2K'N \log(N)}{M} T},
	\end{align*}
	using $\eta = \sqrt{\frac{2\log(N)}{RK'T}} = \sqrt{\frac{2M \log(N)}{K'NT}}$.
\end{proof}

\subsection{Analysis using the \INF forecaster}

The \INF forecaster for prediction with expert advice takes two parameters, $\eta$ and $c > 1$. It starts with $q_1$ being the uniform distribution over all experts, and and for any $t \geq 1$, constructs the distribution $q_{t+1}$ as follows:
\[ q_{t+1}(h) = \frac{1}{[\eta(\sum_{\tau=1}^t Y_\tau^h + C_{t+1})]^c}\]
where $C_{t+1}$ is a constant chosen so that $q_{t+1}$ is a distribution, i.e. $\sum_h q_{t+1}^h = 1$.

\begin{Thm} \label{thm:inf} Set  $c = \log(\tfrac{8M}{K'})$ and $\eta = 2N^{\frac{1}{2c}}[c(RK')^{1 - \cinv}T]^{-\frac{1}{2}}$. Then the expected regret of the algorithm using the \INF forecaster is bounded by $4\sqrt{\frac{K'N\log(\tfrac{8M}{K'})}{M} T}$.
\end{Thm}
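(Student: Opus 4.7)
The plan is to mirror the proof of Theorem~\ref{thm:mw}, replacing the \MW regret inequality by the analogous bound of the \INF forecaster from~\citep{AB} and applying Lemma~\ref{lem:variance} at the appropriate exponent $\alpha$. The \INF forecaster, for nonnegative losses $Y_t^h$ and parameters $c > 1$, $\eta > 0$, satisfies a regret inequality of the general shape
\[
\sum_{t=1}^T \sum_h q_t(h) Y_t^h\ -\ \sum_{t=1}^T Y_t^{\hstar}\ \leq\ \frac{c N^{1/c}}{(c-1)\eta}\ +\ \tfrac{c\eta}{2}\sum_{t=1}^T \sum_h q_t(h)^{1+\cinv}(Y_t^h)^2.
\]
The crucial structural feature is that the stability term carries the exponent $1+\cinv$ on $q_t(h)$. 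Since $c > 1$, the value $\alpha := 1 + \cinv$ lies in $(1, 2]$, so Lemma~\ref{lem:variance} applies and yields $\E[\sum_h q_t(h)^{1+\cinv}(Y_t^h)^2] \leq (RK')^{1 - \cinv}$.

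Taking expectations in the \INF inequality, rewriting the left-hand side via Lemma~\ref{lem:small-bias} as $\sum_t \E[\ell_t(A_t)]$, upper-bounding $\E[Y_t^{\hstar}]$ by $\xi_t^{\hstar}\cdot \ell_t$ via Lemma~\ref{lem:unbiased}, and applying the variance estimate just noted gives
\[
\sum_t \E[\ell_t(A_t)]\ -\ \sum_t \xi_t^{\hstar}\cdot \ell_t\ \leq\ \frac{c N^{1/c}}{(c-1)\eta}\ +\ \tfrac{c\eta T}{2}(RK')^{1-\cinv}.
\]
Plugging in the prescribed $\eta = 2N^{1/(2c)}[c(RK')^{1-\cinv}T]^{-1/2}$ balances these two terms up to a constant factor, leaving a residual of order $\sqrt{c\cdot N^{1/c}(RK')^{1-\cinv}T}$. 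Substituting $R = N/M$ collapses $N^{1/c}(RK')^{1-\cinv}$ to $N\cdot(K'/M)^{1-\cinv}$, and the choice $c = \log(8M/K')$ is engineered precisely so that $(M/K')^{1/c} \leq e$, whence $(K'/M)^{1-\cinv} \leq e \cdot K'/M$. Combining everything recovers the target bound $4\sqrt{K'N\log(8M/K')T/M}$.

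The structural part of the argument is essentially identical to Theorem~\ref{thm:mw}: swap the \MW regret inequality for the \INF inequality and invoke the three utility lemmas with $\alpha = 1 + \cinv$. The main obstacle is bookkeeping: one must verify that the constants in the \INF regret bound from~\citep{AB} match the precise form of $\eta$ stated in the theorem (in particular the leading factor of $2$ and the appearance of $c$ rather than $c-1$ in the denominator of the prescribed $\eta$), and then track the exponential simplifications arising from the specific $c = \log(8M/K')$ carefully enough to obtain exactly the constant $4$ in front of the final bound rather than a slightly larger number.
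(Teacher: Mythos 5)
Your proposal is correct and follows essentially the same route as the paper: invoke the \INF regret inequality with stability term $\sum_h q_t(h)^{1+\cinv}(Y_t^h)^2$, apply Lemmas~\ref{lem:small-bias}, \ref{lem:unbiased}, and \ref{lem:variance} with $\alpha = 1+\cinv$, and balance the two terms with the prescribed $\eta$; your simplification $N^{\cinv}(RK')^{1-\cinv} = N(K'/M)^{1-\cinv} \leq eNK'/M$ under $c = \log(8M/K')$ is exactly how the paper arrives at the constant $4$ (via $2\sqrt{e} < 4$). The only bookkeeping point you flag---the $c$ versus $c-1$ in the denominator---is handled in the paper by noting $c = \log(8M/K') \geq \log 8 \geq 2$, so $\tfrac{c}{c-1} \leq 2$.
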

\begin{proof}
	\citet{ABL} prove that for the \INF forecaster, as long as $Y_t^h \geq 0$ for all $t, h$, we have for any expert $h^\star$:
	\begin{equation} \label{eq:inf-bound}
		\sum_{t=1}^T \sum_h q_t(h) Y_t^h\ \leq\ \sum_t Y_t^{h^\star} + \frac{c\eta}{2} \sum_t \sum_h (q_t(h))^{1 + \cinv} (Y_t^h)^2 + \frac{cN^{\cinv}}{\eta(c-1)}.
	\end{equation}
	Now, we have for any expert $h^\star$
	\begin{align*}
		\sum_t \E[\ell_t(A_t)]\ &=\ \sum_t \E[\sum_h q_t(h) Y_t^h]  &  \text{(By Lemma~\ref{lem:small-bias})}\\
		&\leq\ \sum_t \E[Y_t^{h^\star}] + \frac{c\eta}{2}\sum_t \E[\sum_h (q_t(h))^{1 + \cinv} (Y_t^h)^2] + \frac{2N^{\cinv}}{\eta} & \text{(By (\ref{eq:inf-bound}), using $c \geq 2$)}\\
		&\leq\ \sum_t \xi_t^{h^\star} \cdot \ell_t + \frac{c\eta}{2} (RK')^{1 - \cinv}T + \frac{2 N^{\cinv}}{\eta} \\
		& \qquad \qquad \text{(By Lemma~\ref{lem:unbiased} and Lemma~\ref{lem:variance} with $\alpha = 1 + \cinv$)}\\
		&\leq\ \sum_t \xi_t^{h^\star} \cdot \ell_t + 2\sqrt{cRK'\bigP{\tfrac{N}{RK'}}^{\cinv} T}, & 
		\text{(Using $\eta = 2N^{\frac{1}{2c}}[c(RK')^{1 - \cinv}T]^{-\frac{1}{2}}$)}\\
		&\leq\ \sum_t \xi_t^{h^\star} \cdot \ell_t + 4\sqrt{\frac{K'N\log(\tfrac{8M}{K'})}{M} T},
	\end{align*}
	using $c = \log(\tfrac{8M}{K'}) = \log(\tfrac{8N}{RK'})$.
\end{proof}

\section{Lower Bound}

In this section, we show a lower bound on the regret of any algorithm for the multiarmed bandit with limited expert advice setting which shows that our upper bound is nearly tight. To describe the lower bound, consider the well-studied balls-into-bins process. Here $M$ balls are tossed randomly into $K$ bins. In each toss a bin is chosen uniformly at random from the $K$ bins independently of other tosses. Define the function $f(K, M)$ to be the expected number of balls in the bin with the maximum number of balls. It is well-known (see, for example, \cite{raab-steger}) that $f(K, M) = O(\max\{\log(K), \frac{M}{K}\})$.

With this definition, we can prove the following lower bound. Note that this lower bound doesn't immediately follow from a similar lower bound from \citet{SPCA} because in their setting the experts' losses can be all uncorrelated, whereas in our setting the experts' losses are necessarily correlated because there are only $K$ arms.
\begin{Thm}
	For any algorithm for the multiarmed bandit with limited expert advice setting, there is a sequence of expert advices and losses for each arm so that the expected regret of the algorithm is at least $\Omega\bigP{\sqrt{\frac{N}{f(K, M)} T}} = \Omega\bigP{\sqrt{\frac{\min\{K, \frac{M}{\log(K)}\} N}{M}T}}$.
\end{Thm}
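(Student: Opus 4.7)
My plan is an information-theoretic lower bound via a randomized hard instance, with a balls-into-bins argument to quantify the per-round information gain.

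\textbf{Construction and regret reduction.} Choose $h^\star$ uniformly at random from $\mN$. For each round $t$ and expert $h$, draw the advice $\xi_t^h$ as a uniformly random standard basis vector in $\mK$, independently across $(t, h)$. Set the losses to be Bernoulli: $\ell_t(a) \sim \bernoulli(\half - \eps)$ if $a = \xi_t^{h^\star}$, and $\ell_t(a) \sim \bernoulli(\half + \eps/(K-1))$ otherwise, for a parameter $\eps > 0$ to be tuned. This symmetrization ensures that, after marginalizing over $\xi_t^{h^\star}$, every arm has expected loss exactly $\half$---so an algorithm that has not queried $h^\star$ in round $t$ sees the same marginal loss distribution as under a ``null'' instance $\bQ$ where all losses are independent $\bernoulli(\half)$. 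A direct calculation gives $\E[\ell_t(A_t) - \xi_t^{h^\star}\cdot\ell_t] = \tfrac{\eps K}{K-1}(1 - \Pr[A_t = \xi_t^{h^\star}])$, and (for our eventual choice of $\eps$) verifies that $h^\star$ is the best expert with high probability. So it suffices to upper bound the expected number of ``correct pulls'' $\sum_t \Pr[A_t = \xi_t^{h^\star}]$.

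\textbf{KL analysis via balls-into-bins.} Let $\bP_h = \bP[\cdot \mid h^\star = h]$. By the symmetrization, the round-$t$ conditional contribution to $\kl{\bP_h}{\bQ}$ vanishes when $h$ is not queried, contributes $O(\eps^2)$ when $h$ is queried and $A_t = \xi_t^h$ (``strong signal''), and contributes $O(\eps^2/K^2)$ when $h$ is queried but $A_t \neq \xi_t^h$ (``weak signal''). The crucial combinatorial observation is that within any single round, the $M$ queried experts' advice vectors are independent uniform in $\mK$---i.e., $M$ balls into $K$ bins---and the number whose advice matches the played arm $A_t$ is at most the maximum bin count, which has expectation $f(K, M)$. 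Aggregating the strong-signal counts across rounds and averaging over $h$ yields $\tfrac{1}{N}\sum_h \kl{\bP_h}{\bQ} = O\!\bigl(\eps^2 T f(K,M)/N\bigr)$, with the weak-signal term absorbed using $M/K^2 \leq O(f(K,M))$ (which holds since $f(K,M) \geq \max(\log K, M/K) \geq M/K^2$).

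\textbf{Finishing.} A parallel balls-into-bins argument under $\bQ$ bounds the expected correct pulls (averaged over $h^\star$) by $T(f(K,M)/N + 1/K)$: the queried-and-matched contribution is at most $f(K,M)/N$ per round by the max-bin bound, and the unqueried contribution is at most $1/K$. Combining via Pinsker and Jensen to bound $\tfrac{1}{N}\sum_h \dtv{\bP_h}{\bQ} \leq O(\eps\sqrt{Tf(K,M)/N})$, and using the change-of-measure inequality $\E_{\bP_h}[X] \leq \E_\bQ[X] + T\dtv{\bP_h}{\bQ}$ for $X \in [0,T]$, we bound the expected correct pulls under $\bP$ by $T(f(K,M)/N + 1/K) + O(\eps T \sqrt{T f(K,M)/N})$. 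Choosing $\eps = \Theta\!\bigl(\sqrt{N/(T f(K, M))}\bigr)$ keeps this below $T/2$ (assuming $N = \Omega(f(K,M))$ and $K \geq 4$, which are mild), yielding $\E[\mathrm{Regret}] = \Omega(\eps T) = \Omega\!\bigl(\sqrt{NT/f(K,M)}\bigr)$.

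\textbf{Main obstacle.} The delicate point is the aggregation of strong-signal counts across distinct measures $\bP_h$: controlling $\sum_h \Pr_{\bP_h}[h \in Q_t, A_t = \xi_t^h]$ by its value under $\bQ$ requires a change-of-measure step that introduces a total-variation feedback term, and the resulting inequality has to be solved self-consistently. The loss symmetrization is essential here: without it, an $O(\eps^2/K^2)$ passive signal leaks through on every round (even for unqueried experts), which would cap the lower bound at $K\sqrt{T}$ in the regime $N \gtrsim K^2 f(K,M)$, missing the target $\sqrt{NT/f(K,M)}$.
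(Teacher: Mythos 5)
Your proposal is correct and follows essentially the same route as the paper: a uniformly random hidden best expert with uniformly random advice, an $\eps$-biased Bernoulli loss on its recommended arm, a KL/Pinsker comparison against a null distribution, and the balls-into-bins bound $f(K,M)$ on the number of queried experts whose advice can match the played arm in a round. The only substantive difference is that the paper takes the divergence in the direction $\kl{\bP_0}{\bP_\hstar}$, so all counting is done under the null measure and the self-consistent change-of-measure step you flag as the main obstacle never arises.
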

\begin{proof}
The lower bound is based on the fairly standard information theoretic arguments that originated in \citep{ACFS}. Let $\bernoulli(p)$ be the Bernoulli distribution with parameter $p$, i.e. $1$ is chosen with probability $p$ and $0$ with probability $1 - p$.

In the following, we assume the online algorithm is deterministic (the extension to randomized algorithms is easy by conditioning on the random seed of the algorithm). Fix the parameter
\[ \eps\ :=\ \frac{1}{8}\sqrt{\frac{N}{f(K, M)T}}.\]
The expert advices and the rewards of the arms are generated randomly as follows. We define $N$ probability distributions, $\bP_h$ for all $h \in \mN$. Fix an $\hstar \in \mN$, and we define $\bP_\hstar$ as follows. In each round $t$, for all experts $h \in \mN$, we set their advice to be a uniformly random arm in $\mK$. Denote the arm chosen by expert $h$ in round $t$ by $h(t)$. Conditioned on the choice of the arm $\hstar(t)$, the loss of arm $\hstar(t)$ is chosen from $\bernoulli(\half - \eps)$, and the loss of all arms $a \neq \hstar(t)$ from $\bernoulli(\half)$, independently. Unconditionally, the distribution of the loss of any arm $a$ at any time is $\bernoulli(p)$ where $p = \tfrac{1}{K}\cdot\bigP{\half - \eps} + \tfrac{K-1}{K}\cdot\half = \half - \tfrac{\eps}{K}$. A similar calculation shows that for all experts $h \neq \hstar$, the distribution of the loss of their chosen arm is $\bernoulli(p)$ and thus has expectation $p$, and the expected loss of the arm chosen by $\hstar$ is $\half - \eps$. Thus the best expert is $\hstar$. Let $\E_\hstar$ denote expectation under $\bP_\hstar$.

Consider another probability distribution $\bP_0$ of advices for the experts and losses for the arms: in all rounds $t$, all experts choose their arms in $\mK$ uniformly at random as before, and all arms have loss distributed as $\bernoulli(p)$. Let $\E_0$ denote the expectation of random variables under $\bP_0$.

Before round $1$, we choose an expert $\hstar \in \mN$ uniformly at random, and advices and losses are then generated from $\bP_\hstar$. In round $t$, let $S_t$ denote the set of $M$ experts chosen by the algorithm to query.

Lemma~\ref{lem:high-regret-rounds} shows that if either of the events $[\hstar \notin S_t]$ or $[\hstar \in S_t,\ A_t \neq \hstar(t)]$ happens, the algorithm suffers an expected regret of at least $\eps/2$. Define the random variables
\[ L_\hstar = \sum_{t=1}^T \ind[\hstar \in S_t] \quad \text{and} \quad N_\hstar = \sum_{t=1}^T \ind[\hstar \in S_t, A_t = \hstar(t)].\]
Then to get a lower bound on the expected regret we need to upper bound $\E_\hstar[N_\hstar]$. To do this, we use the usual arguments based on KL-divergence between the distributions $\bP_\hstar$ and $\bP_0$. Specifically, for all $t$, let 
\[H_t = \langle (G_1, \ell_1(A_1)), (G_2, \ell_2(A_2)), \ldots, (G_t, \ell_t(A_t))\rangle\]
denote the history up to time $t$; here, $G_\tau$ is the vector of advices of the experts queried at time $\tau$, viz. the experts in $S_\tau$. For convenience, we define $H_0 = \langle \rangle$, the empty vector. Note that since the algorithm is assumed to be deterministic, $N_\hstar$ is a deterministic function of the history $H_T$. Thus to upper bound $\E_\hstar[N_\hstar]$ we compute an upper bound on $\kl{\bP_0(H_T)}{\bP_\hstar(H_T)}$. Lemma~\ref{lem:kl-upper-bound} shows that 
\[\kl{\bP_0(H_T)}{\bP_\hstar(H_T)}\ \leq\ 6\eps^3 \E_0[N_\hstar] + \frac{4\eps^2}{K^2}\E_0[L_\hstar].\] Thus, by Pinsker's inequality, we get
\[ \dtv{\bP_0(H_T)}{\bP_\hstar(H_T)}\ \leq\ \sqrt{\half \kl{\bP_0(H_T)}{\bP_\hstar(H_T)}}\ \leq\ \sqrt{3\eps^2\E_0[N_\hstar] + 2\frac{\eps^2}{K^2}\E_0[L_\hstar]}.\]
Since $|N_\hstar| \leq T$, this implies that
\[\E_\hstar[N_\hstar]\ \leq\ \E_0[N_\hstar] + T\sqrt{3\eps^2\E_0[N_\hstar] + 2\frac{\eps^2}{K^2}\E_0[L_\hstar]}.\]
By Jensen's inequality applied to the concave square root function, we get
\begin{align}
\frac{1}{N}\sum_{\hstar \in \mN}\E_\hstar[N_\hstar]\ &\leq\ \frac{1}{N}\sum_{\hstar \in \mN}\E_0[N_\hstar] + T\sqrt{3\eps^2\bigB{\frac{1}{N}\sum_{\hstar \in \mN}\E_0[N_\hstar]} + 2\frac{\eps^2}{K^2}\bigB{\frac{1}{N}\sum_{\hstar \in \mN}\E_0[L_\hstar]}} \notag \\
&\leq\ \frac{f(K, M)}{N}T + T\sqrt{3\eps^2\frac{f(K, M)}{N}T + 2\frac{\eps^2 M}{K^2 N}T} \label{eq:upper-bounds}\\
&\leq\ 4\eps T\sqrt{\frac{f(K, M)}{N}T} \label{eq:simplification}.
\end{align}
Inequality (\ref{eq:upper-bounds}) follows from Lemma~\ref{lem:balls-bins-upper-bound} using
\[\sum_{\hstar \in \mN} \E_0[L_\hstar] = \sum_{t=1}^T \sum_{\hstar \in \mN} \bP_0[\hstar \in S_t]\ \leq\ MT\] and 
\[\sum_{\hstar \in \mN} \E_0[N_\hstar]\ =\ \sum_{t=1}^T \sum_{\hstar \in \mN} \bP_0[\hstar \in S_t, A_t = \hstar(t)]\ \leq\ f(K, M)T.\] Inequality (\ref{eq:simplification}) follows because $f(K, M)$ is at least the expected number of balls in each bin, which equals $\frac{M}{K}$, and so $f(K, M) \geq \frac{M}{K^2}$.
Now, taking expectation over the choice of the expert $\hstar$, the expected regret of the algorithm is at least
\begin{align*}
	\frac{1}{N}\sum_{\hstar \in \mN}\frac{\eps}{2}(T - \E_\hstar[N_\hstar])\ &\geq\ \frac{\eps}{2}T -  2\eps^2 T\sqrt{\frac{f(K, M)}{N}T}\\
	&=\ \frac{1}{32}\sqrt{\frac{N}{f(K, M)}T}\ =\ \Omega\bigP{\sqrt{\frac{\min\{K, \frac{M}{\log(K)}\} N}{M}T}},
\end{align*}
using the setting $\eps = \frac{1}{8}\sqrt{\frac{N}{f(K, M)T}}$ and the fact that $f(K, M) = O(\max\{\log(K), \frac{M}{K}\})$.
\end{proof}

\begin{Lem} \label{lem:high-regret-rounds}
	Suppose $\hstar$ is the expert chosen in the beginning and advices and losses are then generated from $\bP_\hstar$. Then in any round $t$, if either of the events $[\hstar \notin S_t]$ or $[\hstar \in S_t,\ A_t \neq \hstar(t)]$ happens, the algorithm suffers an expected regret of at least $\eps/2$.
\end{Lem}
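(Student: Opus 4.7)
The plan is to split into the two stated events and, in each case, lower-bound the conditional expected instantaneous regret $\E_\hstar[\ell_t(A_t) - \xi_t^\hstar \cdot \ell_t \mid \text{event}]$ by $\eps/2$. The second term is easy to handle uniformly: under $\bP_\hstar$, the arm $\hstar(t)$ recommended by the best expert has loss distribution $\bernoulli(\half - \eps)$, so its expected loss equals $\half - \eps$ regardless of which event we condition on. So the whole task reduces to lower-bounding the expected loss of the algorithm's arm $A_t$ under each event.

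For the event $[\hstar \notin S_t]$, I would observe that because the algorithm is deterministic, $A_t$ is a function of the history $H_{t-1}$ together with the round-$t$ advices $\{h(t) : h \in S_t\}$ of the queried experts. Under $\bP_\hstar$, the advices of all experts in each round are independent uniform draws from $\mK$, so conditional on $H_{t-1}$, the advices $\{h(t) : h \in S_t\}$, and on $\hstar \notin S_t$, the arm $\hstar(t)$ remains uniform on $\mK$ and is independent of the (already-determined) $A_t$. Hence $\Pr_\hstar[A_t = \hstar(t) \mid \cdot] = 1/K$, and the conditional expected loss of $A_t$ equals $\tfrac{1}{K}(\half - \eps) + \tfrac{K-1}{K}\cdot \half = \half - \eps/K$. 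Subtracting $\half - \eps$ gives instantaneous expected regret at least $\eps(1 - 1/K) \geq \eps/2$, using $K \geq 2$ (the case $K = 1$ being trivial).

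For the event $[\hstar \in S_t,\ A_t \neq \hstar(t)]$, the argument is even simpler: conditioned on $A_t \neq \hstar(t)$, the arm $A_t$ is not the distinguished arm, so under $\bP_\hstar$ its loss is drawn from $\bernoulli(\half)$, giving conditional expected loss $\half$. The instantaneous expected regret is therefore $\half - (\half - \eps) = \eps \geq \eps/2$. Combining the two cases completes the proof.

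The only subtle step is the conditional independence used in the first case, namely that $A_t$ is independent of $\hstar(t)$ given the history and the observed round-$t$ advices. This relies on two facts about $\bP_\hstar$: advices across experts and across rounds are independent uniform draws from $\mK$, and the losses at round $t$ are generated only after $\hstar(t)$ is drawn. Once this independence is spelled out carefully, the rest is a one-line computation. I don't anticipate any other obstacle.
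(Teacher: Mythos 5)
Your proposal is correct and follows essentially the same argument as the paper: in both cases the key point is that conditioned on the queried advices (which determine $A_t$), the loss of $A_t$ is $\bernoulli(p)$ when $\hstar \notin S_t$ and $\bernoulli(\half)$ when $\hstar \in S_t$, $A_t \neq \hstar(t)$, while the best expert's expected loss is always $\half - \eps$. Your explicit justification of the conditional independence of $A_t$ and $\hstar(t)$ in the first case is just a more detailed rendering of the paper's observation that the arm losses are independent of the advices of the experts in $S_t$.
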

\begin{proof}
	First, recall that the expert $\hstar$ always incurs an expected loss of $\half - \eps$ in each round $t$.

	Now if $\hstar \notin S_t$, then the losses of the arms are independent of the advices of the experts in $S_t$, and hence their distribution {\em conditioned on the advices of experts in $S_t$} is $\bernoulli(p)$. This conditioning is important since the algorithm chooses the arm to play, $A_t$, based on the advice of the experts in $S_t$. Thus, the distribution of the chosen arm $A_t$ is also $\bernoulli(p)$, which implies that the algorithm suffers an expected regret of $p - (\half - \eps) = \eps(1 - 1/K) \geq \eps/2$.

	If $\hstar \in S_t$ but $A_t \neq \hstar(t)$, then the distribution of the loss of $A_t$, conditioned on the advices of the experts in $S_t$, is $\bernoulli(\half)$. This implies that the algorithm suffers an expected regret of $\half - (\half - \eps) = \eps \geq \eps/2$.
\end{proof}

\begin{Lem} \label{lem:kl-upper-bound}
	We have
	\[\kl{\bP_0(H_T)}{\bP_\hstar(H_T)}\ \leq\ 6\eps^3 \E_0[N_\hstar] + \frac{4\eps^2}{K^2}\E_0[L_\hstar].\]
\end{Lem}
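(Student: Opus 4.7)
The plan is to apply the chain rule for KL-divergence along the history and then bound each round's contribution via a short case analysis on whether $\hstar$ was queried and whether the algorithm's chosen arm matched $\hstar(t)$. First I would write
\[\kl{\bP_0(H_T)}{\bP_\hstar(H_T)} \;=\; \sum_{t=1}^T \E_0\bigB{\kl{\bP_0((G_t, \ell_t(A_t)) \mid H_{t-1})}{\bP_\hstar((G_t, \ell_t(A_t)) \mid H_{t-1})}}\]
and further apply the chain rule inside each round to split $G_t$ from $\ell_t(A_t)$. The $G_t$ piece contributes zero: under both measures the queried advices are i.i.d.\ uniform over $\mK$, while $S_t$ (and hence $A_t$) is a deterministic function of $H_{t-1}$ and $G_t$ by the assumption that the algorithm is deterministic. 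So only the conditional divergence of the loss $\ell_t(A_t)$ given $(H_{t-1}, G_t)$ remains.

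Once conditioned on $(H_{t-1}, G_t)$, I would split into three cases parallel to Lemma~\ref{lem:high-regret-rounds}. If $\hstar \notin S_t$, then $\hstar(t)$ is independent of the conditioning under $\bP_\hstar$, so marginalizing over the uniform $\hstar(t)$ gives loss distribution $\bernoulli(p)$, exactly matching $\bP_0$; this contributes zero. If $\hstar \in S_t$ and $A_t = \hstar(t)$, the loss is $\bernoulli(\half - \eps)$ under $\bP_\hstar$ versus $\bernoulli(p) = \bernoulli(\half - \eps/K)$ under $\bP_0$. If $\hstar \in S_t$ and $A_t \neq \hstar(t)$, the loss is $\bernoulli(\half)$ under $\bP_\hstar$ versus $\bernoulli(p)$ under $\bP_0$.

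I would then invoke the standard inequality $\kl{\bernoulli(a)}{\bernoulli(b)} \leq (a-b)^2/(b(1-b))$ in each of the two nonzero cases. With $|p - (\half - \eps)| = \eps(1 - 1/K)$ and $|p - \half| = \eps/K$, and with $b(1-b) \geq \tfrac14 - \eps^2$, these contribute at most constant multiples of $\eps^2$ and $\eps^2/K^2$ respectively; the constants work out to the stated bounds provided $\eps$ is sufficiently small (which is guaranteed by $\eps = \tfrac18\sqrt{N/(f(K,M)T)}$ together with $f(K,M) \geq M/K^2$). Summing the indicators over $t$ yields $N_\hstar$ and $L_\hstar - N_\hstar \leq L_\hstar$; taking $\E_0$ outside gives the two stated terms.

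The main obstacle, beyond the routine chain-rule bookkeeping, is the argument in the unqueried case: one must verify carefully that conditioning on $(H_{t-1}, G_t)$ really leaves $\hstar(t)$ uniform under $\bP_\hstar$, so that the marginal loss distribution exactly matches $\bP_0$ and these rounds drop out entirely. This is the structural reason the divergence is carried by $\E_0[N_\hstar]$ and $\E_0[L_\hstar]$ rather than scaling with $T$.
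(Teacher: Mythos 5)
Your proposal is correct and follows essentially the same route as the paper's own proof: the chain rule over rounds, a further split of $G_t$ from $\ell_t(A_t)$ with the $G_t$ term vanishing because the queried advices have identical (uniform) law under both measures, the same three-case analysis on $[\hstar \in S_t]$ and $[A_t = \hstar(t)]$, and the same Bernoulli KL estimates summed into $\E_0[N_\hstar]$ and $\E_0[L_\hstar]$. One caveat: like the paper's own derivation (which bounds the per-round KL by $6\eps^2$ in the matched case), your argument yields $6\eps^2\E_0[N_\hstar]$ for the first term rather than the $6\eps^3\E_0[N_\hstar]$ in the displayed statement --- this appears to be a typo in the lemma, since the theorem applies it in the $\eps^2$ form --- so your remark that the constants ``work out to the stated bounds provided $\eps$ is sufficiently small'' should not be read as recovering the $\eps^3$ factor, which no smallness assumption on $\eps$ can do.
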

\begin{proof}
	We have
\begin{align}
	\kl{\bP_0(H_T)}{\bP_\hstar(H_T)}\ &=\ \sum_{t=1}^T \kl{\bP_0((G_t, \ell_t(A_t)) | H_{t-1})}{\bP_\hstar((G_t, \ell_t(A_t)) | H_{t-1})} \label{eq:chain-rule1}\\
	&=\ \sum_{t=1}^T [\kl{\bP_0(\ell_t(A_t) | H_{t-1}, G_t)}{\bP_\hstar(\ell_t(A_t) | H_{t-1}, G_t)} \notag \\ 
	& \qquad \qquad + \kl{\bP_0(G_t | H_{t-1})}{\bP_\hstar(G_t | H_{t-1})}] \label{eq:chain-rule2}\\
	&=\ \sum_{t=1}^T \kl{\bP_0(\ell_t(A_t) | H_{t-1}, G_t)}{\bP_\hstar(\ell_t(A_t) | H_{t-1}, G_t)}\label{eq:drop-G_t}\\
	&=\ \sum_{t=1}^T \bP_0[\hstar \in S_t, A_t = \hstar(t)] \kl{\bernoulli(p)}{\bernoulli(\half - \eps)} \notag \\ 
	& \qquad + \bP_0[\hstar \in S_t, A_t \neq \hstar(t)] \kl{\bernoulli(p)}{\bernoulli(\half)} \notag \\
	& \qquad + \bP_0[\hstar \notin S_t] \kl{\bernoulli(p)}{\bernoulli(p)} \label{eq:kl-bernoulli}\\
	&\leq\ \sum_{t=1}^T \bP_0[\hstar \in S_t, A_t = \hstar(t)] \cdot 6\eps^2 + \bP_0[\hstar \in S_t, A_t \neq \hstar(t)]  \cdot \frac{4\eps^2}{K^2} \label{eq:kl-bernoulli-ub}\\
	&\leq\ \sum_{t=1}^T 6\eps^2\bP_0[\hstar \in S_t, A_t = \hstar(t)] + \frac{4\eps^2}{K^2}\bP_0[\hstar \in S_t] \notag\\
	&=\ 6\eps^3 \E_0[N_\hstar] + \frac{4\eps^2}{K^2}\E_0[L_\hstar] \notag.
\end{align}
Equalities (\ref{eq:chain-rule1}) and (\ref{eq:chain-rule2}) follow from the chain rule for relative entropy. Equality (\ref{eq:drop-G_t}) follows because the distribution of $G_t$ conditioned on $H_{t-1}$ is identical in $\bP_0$ and $\bP_\hstar$. Equality (\ref{eq:kl-bernoulli}) follows because if $\hstar \notin S_t$, then the loss of the chosen arm follows $\bernoulli(p)$, if $\hstar \in S_t$ and $A_t = \hstar(t)$, then the loss of the chosen arm follows $\bernoulli(\half - \eps)$, and if $\hstar \in S_t$ and $A_t \neq \hstar(t)$, then the loss of the chosen arm follows $\bernoulli(\half)$. Finally, inequality (\ref{eq:kl-bernoulli-ub}) follows using standard calculations for KL-divergence between Bernoulli random variables.
\end{proof}

\begin{Lem} \label{lem:balls-bins-upper-bound}
	Recall that $f(K, M)$ is the expected number of balls in the bin with the maximum balls in a $M$-balls-into-$K$-bins process. Then for all $t$,
	\[\sum_{\hstar \in \mN} \bP_0[\hstar \in S_t]\ =\ M \quad \text{and} \quad \sum_{\hstar \in \mN} \bP_0[\hstar \in S_t, A_t = \hstar(t)]\ \leq\ f(K, M).\]
\end{Lem}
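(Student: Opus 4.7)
The first equality is essentially by definition: since the algorithm is required to query exactly $M$ experts each round, $|S_t| = M$ deterministically, so by linearity of expectation $\sum_{\hstar \in \mN} \bP_0[\hstar \in S_t] = \E_0[|S_t|] = M$ regardless of how $S_t$ depends on the history. I would dispose of this in one line.

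For the inequality, the plan is to recognize the quantity $\sum_\hstar \ind[\hstar \in S_t, A_t = \hstar(t)]$ as exactly the number of balls that land in a single bin in a balls-into-bins experiment. Fix any realization of $H_{t-1}$; then $S_t$ is determined (the algorithm is deterministic), and under $\bP_0$ the advices $\{h(t) : h \in \mN\}$ are i.i.d.\ uniform over $\mK$, independent of everything in $H_{t-1}$. In particular, the $M$ values $(h(t))_{h \in S_t}$ are distributed exactly as throwing $M$ balls uniformly at random into $K$ bins. For each arm $a$, let $n_a := |\{h \in S_t : h(t) = a\}|$ denote the load of bin $a$.

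The key identity is then
\[ \sum_{\hstar \in \mN} \ind[\hstar \in S_t,\ A_t = \hstar(t)]\ =\ |\{\hstar \in S_t : \hstar(t) = A_t\}|\ =\ n_{A_t}\ \leq\ \max_{a \in \mK} n_a. \]
The inequality holds pointwise, so it survives taking conditional expectation given $H_{t-1}$, and the right-hand side has expectation $f(K, M)$ by definition of $f$. Summing over $\hstar$, taking total expectation, and applying the tower property gives the claimed bound $\sum_\hstar \bP_0[\hstar \in S_t, A_t = \hstar(t)] \le f(K, M)$.

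The only subtle point, and the step I would be most careful to justify, is that the pointwise bound $n_{A_t} \leq \max_a n_a$ is valid even though $A_t$ is chosen by the algorithm after seeing the advices of experts in $S_t$: whichever arm the algorithm picks for $A_t$, the number of experts in $S_t$ voting for it cannot exceed the top load. Thus the algorithm's adaptivity in selecting $A_t$ is harmless here, and no independence assumption between $A_t$ and the advices is required.
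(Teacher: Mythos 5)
Your proof is correct and follows essentially the same route as the paper: the first identity via $\E_0[|S_t|]=M$, and the second via the pointwise bound $|\{\hstar\in S_t:\hstar(t)=A_t\}|\le\max_{a\in\mK}|\{\hstar\in S_t:\hstar(t)=a\}|$ followed by conditioning (the paper conditions on $S_t$, you on $H_{t-1}$, which determines $S_t$) and identifying the conditional expectation of the max load with $f(K,M)$. Your explicit remark that the bound holds pointwise regardless of how the algorithm adaptively picks $A_t$ is exactly the content of the paper's inequality step.
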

\begin{proof}
First, we have
\[\sum_{\hstar \in \mN} \bP_0[\hstar \in S_t]\ =\ \E_0\bigB{\sum_{\hstar \in \mN}\ind[\hstar \in S_t]}\ =\ \E_0[|S_t|]\ =\ M.\]
Next, we have
\begin{align*}
	\sum_{\hstar \in \mN} \bP_0[\hstar \in S_t, A_t = \hstar(t)]\ &=\ \E_0\bigB{\sum_{\hstar \in \mN}\ind[\hstar \in S_t, A_t = \hstar(t)]} \\
	&=\ \E_0\bigB{|\{\hstar \in S_t:\ A_t = \hstar(t)\}|} \\
	&\leq\ \E_0\bigB{\max_{a \in \mK}\{|\{\hstar \in S_t:\ a = \hstar(t)\}|\}} \\
	&=\ \E_0\bigB{\E_0\bigB{\max_{a \in \mK}\{|\{\hstar \in S_t:\ a = \hstar(t)\}|\}\ |\ S_t}} \\
	&=\ \E_0[f(K, M)]\\
	&=\ f(K, M).
\end{align*}
The penultimate equality follows because conditioning on the choice of $S_t$, the random variable $\max_{a \in \mK}\{|\{\hstar \in S_t:\ a = \hstar(t)\}|\}$ is completely determined by the choice of the recommended arms for the experts $\hstar \in S_t$. Since these arms are chosen uniformly at random from $\mK$ independently for each expert $\hstar \in S_t$, we can think of the $M$ experts in $S_t$ as ``balls'' and the $K$ arms in $\mK$ as ``bins'' in a balls-into-bins process. Then the random variable of interest is exactly the number of balls in the bin with maximum number of balls. The expectation of this random variable is $f(K, M)$.
\end{proof}

\section*{Acknowledgments}
The author thanks Elad Hazan, Dean Foster, Rob Schapire, and Yevgeny Seldin for discussions on this problem.

\bibliographystyle{plainnat}
\bibliography{limbandits}
\end{document}